\ifcvprfinal\pagestyle{empty}\fi
\newtheorem{theorem}{Theorem}[section]
\newcommand{\kernel}[2]{\kappa(\mathbf{#1},#2)}
\newcommand{\kerv}{kervolution }
\newcommand{\fref}[1]{Figure~\ref{#1}}
\newcommand{\sref}[1]{Section~\ref{#1}}
\newcommand{\tref}[1]{Table~\ref{#1}}
\title{\href{https://github.com/wang-chen/kervolution}{Kervolutional Neural Networks}}
\author{\href{https://wang-chen.github.io}{Chen Wang}$^{1}$\\
    {\tt\small \href{mailto:chenwang@dr.com}{chenwang@dr.com}}
    % For a paper whose authors are all at the same institution,
    % omit the following lines up until the closing ``}''.
    % Additional authors and addresses can be added with ``\and'',
    % just like the second author.
    % To save space, use either the email address or home page, not both
    \and
    Jianfei Yang$^{1}$\\
    {\tt\small \href{mailto:yang0478@ntu.edu.sg}{yang0478@ntu.edu.sg}}
    \and
    Lihua Xie$^{1}$\\
    {\tt\small \href{mailto:elhxie@ntu.edu.sg}{elhxie@ntu.edu.sg}}
    \and
    Junsong Yuan$^{2}$\\
    {\tt\small  \href{mailto:jsyuan@buffalo.edu}{jsyuan@buffalo.edu}}
    \and
    $^{1}$School of Electrical and Electronic Engineering, Nanyang Technological University, Singapore
    \and
    $^{2}$Computer Science and Engineering Department, State University of New York at Buffalo, USA\\
}
\begin{document}

\maketitle

\begin{abstract}
Convolutional neural networks (CNNs) have enabled the state-of-the-art performance in many computer vision tasks.
However, little effort has been devoted to establishing convolution in non-linear space.
Existing works mainly leverage on the activation layers, which can only provide point-wise non-linearity.
To solve this problem, a new operation, kervolution (\textbf{ker}nel con\textbf{volution}), is introduced to approximate complex behaviors of human perception systems leveraging on the kernel trick.
It generalizes convolution, enhances the model capacity, and captures higher order interactions of features, via patch-wise kernel functions, but without introducing additional parameters.
Extensive experiments show that kervolutional neural networks (KNN) achieve higher accuracy and faster convergence than baseline CNN.
\end{abstract}

\section{Introduction}\label{sec:introduction}

Convolutional neural networks (CNNs) have been tremendously successful in computer vision, \eg image recognition \cite{Krizhevsky:2012wl,He:2016ib} and object detection \cite{Girshick:2015id,Ren:2017kt}.
The core operator, convolution, was partially inspired by the animal visual cortex where different neurons respond to stimuli in a restricted and partially overlaped region known as the receptive field \cite{hubel1962receptive,hubel1968receptive}.
Convolution leverages on its equivariance to translation to improve the performance of a machine learning system \cite{Goodfellow2016deep}.
Its efficiency lies in that the learnable parameters are sparse and shared across the entire input (receptive field).
Nonetheless, convolution still has certain limitations, which will be analyzed below. To address them, this paper introduces kervolution to generalize convolution via the kernel trick. The artificial neural networks containing kervolutional layers are named as kervolutional neural networks (KNN).

There is circumstantial evidence that suggests most cells inside striate cortex\footnote{The striate cortex is the part of the visual cortex that is involved in processing visual information.} can be categorized as simple, complex, and hypercomplex, with specific response properties \cite{hubel1968receptive}.
However, the convolutional layers are linear and designed to mimic the behavior of simple cells in human visual cortex \cite{Zoumpourlis:2017vh}, hence they are not able to express the non-linear behaviors of the complex and hypercomplex cells inside the striate cortex.
It was also demonstrated that higher order non-linear feature maps are able to make subsequent linear classifiers more discriminative \cite{lin2015bilinear, Blondel:2016uz, Cui:2017em}.
However, the non-linearity that comes from the activation layers, \eg rectified linear unit (ReLU) can only provide point-wise non-linearity.
We argue that CNN may perform better if convolution can be generalized to patch-wise non-linear operations via kernel trick.
Because of the increased expressibility and model capacity, better model generalization may be obtained.

Non-linear generalization is simple in mathematics, however, it is generally difficult to retain the advantages of convolution, \ie (i) sharing weights (weight sparsity) and (ii) low computational complexity.
There exists several works towards non-linear generalization.
The non-linear convolutional networks \cite{Zoumpourlis:2017vh}  implement a quadratic convolution at the expense of additional $n(n+1)/2$ parameters, where $n$ is the size of the receptive field.
However, the quadratic form of convolution loses the property of "weight sparsity", since the number of additional parameters of the non-linear terms increases exponentially with the polynomial order, which dramatically increases the training complexity.
Another strategy to introduce high order features is to explore the pooling layers. The kernel pooling method in \cite{Cui:2017em} directly concatenates the non-linear terms, while it requires the calculation of non-linear terms, resulting in a higher complexity.

To address the above problems, kervolution is introduced in this paper to extend convolution to kernel space while keeping the aforementioned advantages of linear convolutions.
Since convolution has been applied to many fields, \eg image and signal processing, we expect kervolution will also play an important role in those applications. However, in this paper, we focus on its applications in artificial neural networks.
The contributions of this paper include: (i) via kernel trick, the convolutional operator is generalized to kervolution, which retains the advantages of convolution and brings new features, including the increased model capacity, translational equivariance, stronger expressibility, and better generalization; (ii) we provide explanations for kervolutional layers from feature view and show that it is a powerful tool for network construction; (iii) it is demonstrated that KNN achieves better accuracy and surpasses the baseline CNN.

%-------------------------------------------------------------------------
\section{Related Work}\label{sec:related-work}

As the name indicates, CNN \cite{lecun1989generalization}  employs convolution as the main operation, which is modeled to mimic the behavior of simple cells found in the \textit{primary visual cortex} known as V1 \cite{hubel1962receptive,hubel1968receptive}.
It have been tremendously successful in numerous applications \cite{lecun1989backpropagation,Krizhevsky:2012wl,He:2016ib,Goodfellow2016deep}.
Many strategies have been applied to improve the capability of model generalization.

AlexNet \cite{Krizhevsky:2012wl} proves that the ensemble method "dropout" is very effective for reducing over-fitting of convolutional networks.
The non-saturated rectified linear unit (ReLU) dramatically improves the convergence speed \cite{Krizhevsky:2012wl} and becomes a standard component of CNN.
Network in network (NIN) \cite{Lin:2013wb} establishes micro networks for local patches within the receptive field, each of which consists of multiple fully connected layers followed by non-linear activation functions.
This improves the model capacity at the expense of additional calculation and complex structures.
GoogLeNet \cite{Szegedy:2015ja} increases both depth and width of CNN by introducing the Inception model, which further improves the performance.
VGG \cite{Simonyan:2015ws} shows that deep CNN with small convolution filters ($3\times3$) is able to bring about significant improvement.

ResNet \cite{He:2016ib} addresses the training problem of deeper CNN and proposes to learn the residual functions with reference to the layer input.
This strategy makes CNN easier to optimize and improves regression accuracy with an increased depth.
DenseNet \cite{Huang:2017kg} proposes to connect each layer to every other layer in a feed-forward fashion, which further mitigates the problem  of vanishing-gradient.
The ResNeXt \cite{Xie:2017hu} is constructed by repeating a building block that aggregates a set of transformations with the same topology, resulting in a homogeneous, multi-branch architecture.
It demonstrates the essence of a new dimension, which is the size of the set of transformations.
In order to increase the representation power, SENet \cite{Hu:2017tf} focuses on channels and adaptively recalibrates channel-wise feature responses by explicitly modeling the interdependencies channels.

In recent years, researchers have been paying much attentions on the extension of convolution.
%, which is one of the most important components of CNN.
To enable the expressibility of convolution for complex cells, the non-linear convolutional network \cite{Zoumpourlis:2017vh} extends convolution to non-linear space by directly introducing high order terms.
However, as indicated before, this introduces a large number additional parameters and increases the training complexity exponentially.
To be invariant to spatial transformations, the spatial transformer network \cite{Jaderberg:2015vo} inserts learnable modules to CNN for manipulating transformed data.
For the same purpose, deformable convolutional network \cite{Dai:2017vy} adds 2-D learnable offsets to regular grid sampling locations for standard convolution, which enables the learning of affine transforms; while \cite{Henriques:2017te} applies a simple two-parameter image warp before a convolution.
CapsNet \cite{Sabour:2017ts} proposes to replace convolution by representing the instantiation parameters of a specific type of entity as activity vectors via a capsule structure. This opens a new research space for artificial neural networks, although its performance on large dataset is still relatively weak. Decoupled network \cite{Liu:2018vk} interprets convolution as the product of norm and cosine angle of the weight and input vectors, resulting in explicit geometrical modeling of the intra-class and extra-class variation.
To process graph inputs, Spline-CNN \cite{fey2018splinecnn} extends convolution by using continuous B-spline basis, which is parametrized by a constant number of trainable control values.  To decrease the storage,  Modulated CNN \cite{wang2018modulated} extends the convolution operator to binary filters, resulting in easier deployment on low power devices.

The kernel technique in this paper was applied to create non-linear classifiers in the context of optimal margin \cite{boser1992training}, which was later recognized as support vector machines (SVM) \cite{cortes1995support}.
It recently has also been widely applied to correlation filter for improving the processing speed.
For example, the kernelized correlation filter (KCF) \cite{Henriques:2015jy} is proposed to speed up the calculation of kernel ridge regression by bypassing a big matrix inversion, while it assumes that all the data are circular shifts of each other \cite{wang2019kernel}, hence it can only predict signal translation.
To break this theoretical limitation, the kernel cross-correlator (KCC) is proposed
in \cite{Wang:2018vt} by defining the correlator in frequency domain directly, resulting in a closed-form solution with computational complexity of $\mathcal{O}(N\log N)$, where $N$ is the signal length. Moreover, it does not impose any constraints on the training data, thus KCC is useful for other applications \cite{Wang:2017wb,2017arXiv171005502W,Wang:2017wl} and is applicable for affine transform prediction, \eg translation, rotation, and scale.
This theorem is further extended to speed up the prediction of joint rotation-scale transforms in \cite{wang2018correlation}. The above works show that the kernel technique is a powerful tool for obtaining both accuracy and efficiency.

The kernel technique recently has also been applied to artificial neural networks to improve the model performance.
The convolutional kernel network (CKN) \cite{Mairal:2014wb} proposes to learn transform invariance by kernel approximation, where the kernel is used as a tool for learning CNN. Nevertheless, the aim of CKN is not to extract non-linear features and it is only different from CNN in the cost functions.
The SimNets \cite{Cohen:2016gp} proposes to insert kernel similarity layers under convolutional layer.
However, both the similarity templates and filters are needed to be trained and require a pre-training process for initialization, which dramatically increases the complexity.
To capture higher order interactions of features, the kernel pooling \cite{Cui:2017em} is proposed in a parameter-free manner.
This is motivated by the aforementioned idea that higher dimensional feature map produced by kernel functions is able to make subsequent linear classifier more discriminative \cite{Blondel:2016uz}.
However, the kernel extension in the pooling stage is not able to extract non-linear features in a patch-wise way.
Moreover, the additional higher order features are still need to be calculated explicitly, which also dramatically improves the complexity.
To solve these problems, kervolution is defined to generalize convolution via the kernel trick.

%-------------------------------------------------------------------------
\section{Kervolution}

We start from a convolution with output $\mathbf{f}(\mathbf{x})$, \ie
\begin{equation}\label{eq:convolution}
\mathbf{f}(\mathbf{x}) = \mathbf{x}\oplus\mathbf{w},
\end{equation}
where $\oplus$ is the convolutional operator and $\mathbf{x}\in\mathbb{R}^{n}$ is a vectorized input and $\mathbf{w}\in\mathbb{R}^{n}$ is the filter.
Specifically, the $i_{\text{th}}$ element of the convolution output $\mathbf{f}(\mathbf{x})$ is calculated as:
\begin{equation}\label{eq:linear-kernel}
\mathbf{f}_i(\mathbf{x}) = \left<\mathbf{x}_{(i)}, \mathbf{w}\right>,
\end{equation}
where $ \left<\cdot, \cdot\right>$ is the inner product of two vectors and $\mathbf{x}_{(i)}$ is the circular shift of $\mathbf{x}$ by $i$ elements.
We define the index $i$ started from $0$.
The kervolution output $\mathbf{g}(\mathbf{x})$ is defined as
\begin{equation}\label{eq:kervolution}
\mathbf{g}(\mathbf{x}) = \mathbf{x}\otimes\mathbf{w},
\end{equation}
where $\otimes$ is the kervolutional operator.
Specifically, the $i_{\text{th}}$ element of $\mathbf{g}(\mathbf{x})$ is defined as:
\begin{equation}\label{eq:kernel-function-high}
\mathbf{g}_i(\mathbf{x}) = \left<\varphi(\mathbf{x}_{(i)}), \varphi(\mathbf{w})\right>,
\end{equation}
where $\varphi(\cdot):\mathbb{R}^n \mapsto \mathbb{R}^d~(d\gg n)$  is a non-linear mapping function.
The definition \eqref{eq:kernel-function-high} enables us to extract features in a high dimensional space, while its computational complexity is also much higher than \eqref{eq:linear-kernel}. Fortunately, we are able to bypass the explicit calculation of the high dimensional features $\varphi(\mathbf{x}_{(i)})$ via the kernel trick \cite{cortes1995support}, since
\begin{equation}\label{eq:kernel-function}
\left<\varphi(\mathbf{x}_{(i)}), \varphi(\mathbf{w})\right> = \sum_{j} c_j(\mathbf{x}^T_{(i)}\mathbf{w})^j = \kappa(\mathbf{x}_{(i)}, \mathbf{w}),
\end{equation}
where $\kernel{\cdot}{\cdot}:\mathbb{R}^n\times\mathbb{R}^n\mapsto\mathbb{R}$ is a kernel function, whose complexity is normally of $\mathcal{O}(n)$ as same as the inner product of convolution. The coefficient $c_j$ can be determined by the mapping function $\varphi(\cdot)$ or a predefined kernel $\kernel{\cdot}{\cdot}$, \eg the Gaussian RBF kernel, in which the feature dimension $d$ is infinite.
Intuitively, the inner product \eqref{eq:linear-kernel} is a linear kernel, thus convolution is a linear case of kervolution.

The \kerv \eqref{eq:kervolution} retains the advantages of convolution  and brings new features: (i) sharing weights (Section \ref{sec:weight-sparsity}); (ii) equivariance to translation (Section \ref{sec:equivariance}); (iii) increased model capacity and new feature similarity (Section \ref{sec:model-capacity});

%-------------------------------------------------------------------------
\subsection{Sharing Weights}\label{sec:weight-sparsity}
Sharing weights normally mean less trainable parameters and lower computational complexity. It is straightforward that the number of elements in filter $\mathbf{w}$ is not increased according to the definition of \eqref{eq:kernel-function}, thus kervolution keeps the sparse connectivity of convolution.
As a comparison, we take the Volterra series-based non-linear convolution adopted in \cite{Zoumpourlis:2017vh} as an example, the additional parameters of non-linear terms dramatically increase the training complexity, since the number of learnable parameters increases exponentially with the order of non-linearity.
Even an quadratic expression $\mathbf{g}^v(\mathbf{x})$ in \eqref{eq:non-linear-convolution} from \cite{Zoumpourlis:2017vh} is of complexity $\mathcal{O}(n^2)$:
\begin{equation}\label{eq:non-linear-convolution}
\mathbf{g}^v_i(\mathbf{x})= \mathbf{x}^T_{(i)}\mathbf{w}_2\mathbf{x}_{(i)} + \mathbf{w}^T_1\mathbf{x}_{(i)},
\end{equation}
where $\mathbf{w}_1\in\mathbb{R}^{n}$ and $\mathbf{w}_2\in\mathbb{R}^{n\times n}$ are the linear and quadratic filters, respectively.
The quadratic term in \eqref{eq:non-linear-convolution} introduces additional $n(n+1)/2$ parameters ($\mathbf{w}_2$ is an upper triangular matrix).
Instead, a typical non-linear kernel is normally of complexity $\mathcal{O}(n)$, \ie the Gaussian RBF kernel, which is the same with linear kernel \eqref{eq:linear-kernel}, thus kervolution preserves the linear computational complexity $\mathcal{O}(n)$.

Another strategy to introduce higher order features is to explore the pooling layers. For example, the kernel pooling method proposed in \cite{Cui:2017em} directly concatenates the non-linear terms $c_j(\mathbf{x}^T_{(i)}\mathbf{w})^j$ as in \eqref{eq:kernel-function} to the pooling layers. However, this requires explicit calculation of the non-linear terms up to $p$ orders, although it can be approximated by  applying the discrete Fourier transform (DFT) of $p$ times, resulting in a computational complexity of $\mathcal{O}(p\cdot n \log n)$.
Nevertheless, based on the kernel trick, kervolution can introduce any order of non-linear terms yet still with linear complexity.

\begin{figure*}
    \centering
	\includegraphics[width=1.0\linewidth]{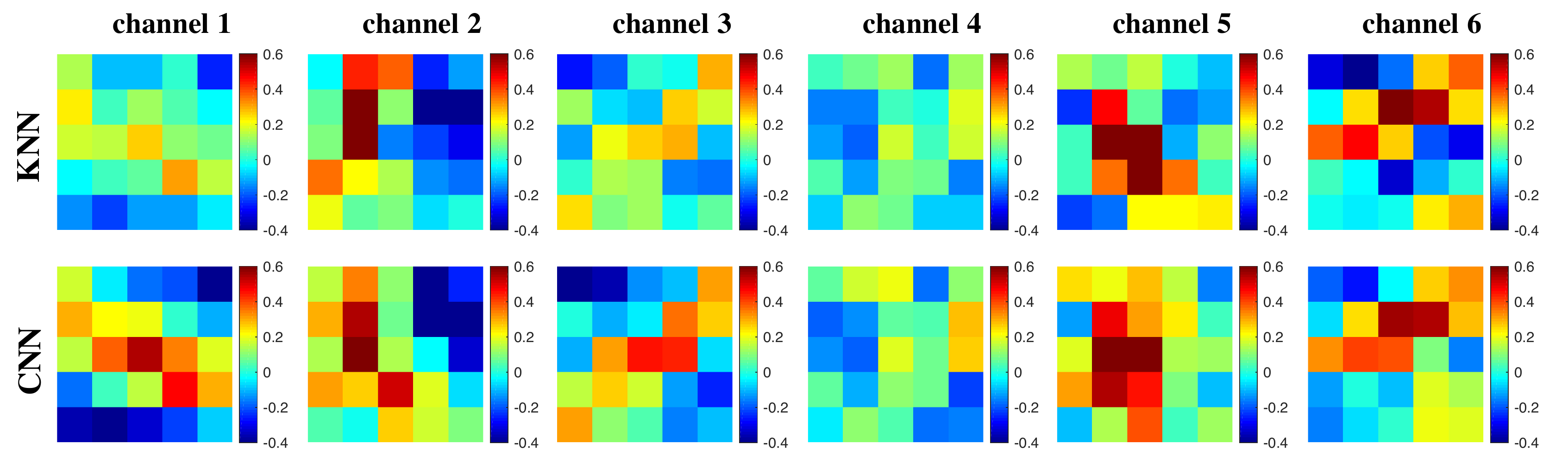}
	\caption{The comparison of learned filters on MNIST from the first layer (six channels and filter size of $5\times5$) of CNN and polynomial KNN. It is interesting that some of the learned filters (\eg channel 4) from KNN are quite similar to CNN. This indicates that part of the kervolutional layer learns linear behavior, which is controlled by the linear part of the polynomial kernel.}
	\label{fig:weight}
\end{figure*}

%-------------------------------------------------------------------------
\subsection{Translational Equivariance}\label{sec:equivariance}

A crucial aspect of current architectures of deep learning is the encoding of invariances.
One of the reasons is that the convolutional layers are naturally equivariant to image translation \cite{Goodfellow2016deep}. In this section, we show that kervolution preserves this important property.
An operator is equivariant to a transform when the effect of the transform is detectable in the operator output \cite{Cohen:2016to}.
Therefore, we have
 $\mathbf{f}_{(j)}(\mathbf{x})=\mathbf{x}_{(j)}\oplus\mathbf{w}$, which means the input translation results in the output translation \cite{Goodfellow2016deep}. Similarly,
\begin{theorem}
	kervolution \eqref{eq:kervolution} is equivariant to translation.
\end{theorem}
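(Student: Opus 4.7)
The plan is to mimic the standard argument that shows convolution is translation-equivariant, and observe that kervolution differs only by replacing the linear kernel with a general kernel $\kappa(\cdot,\cdot)$ that still depends only on the shifted patch $\mathbf{x}_{(i)}$ and the fixed filter $\mathbf{w}$. Concretely, I will unfold the definition in \eqref{eq:kernel-function-high} and \eqref{eq:kernel-function}, apply a shift to the input, and track what happens index-by-index.

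First, I would formalize the claim: the operator $T_j$ that sends $\mathbf{x}\mapsto \mathbf{x}_{(j)}$ satisfies $T_j\bigl(\mathbf{x}\otimes\mathbf{w}\bigr) = (T_j\mathbf{x})\otimes\mathbf{w}$, i.e.\ $[\mathbf{g}(\mathbf{x})]_{(j)} = \mathbf{g}(\mathbf{x}_{(j)})$. The one algebraic fact I need about circular shifts is the group property $(\mathbf{x}_{(j)})_{(i)} = \mathbf{x}_{(i+j)}$, which follows immediately from the definition of the shift (indices are taken modulo the signal length). I would state this as a one-line lemma or simply invoke it.

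With this in hand the computation is essentially one line. Using \eqref{eq:kernel-function-high},
\begin{equation*}
\mathbf{g}_i(\mathbf{x}_{(j)}) = \bigl\langle \varphi\bigl((\mathbf{x}_{(j)})_{(i)}\bigr), \varphi(\mathbf{w})\bigr\rangle = \bigl\langle \varphi(\mathbf{x}_{(i+j)}), \varphi(\mathbf{w})\bigr\rangle = \mathbf{g}_{i+j}(\mathbf{x}),
\end{equation*}
which by definition is $[\mathbf{g}(\mathbf{x})]_{(j)}$ evaluated at index $i$. Equivalently via \eqref{eq:kernel-function}, $\mathbf{g}_i(\mathbf{x}_{(j)}) = \kappa(\mathbf{x}_{(i+j)},\mathbf{w}) = \mathbf{g}_{i+j}(\mathbf{x})$. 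Since $i$ is arbitrary, this gives the desired equality of vectors and establishes equivariance.

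There is no real obstacle here: the whole point is that the kernel trick only acts on the already-shifted patch, so whatever commutation of shifts worked for the inner product in \eqref{eq:linear-kernel} works verbatim for $\kappa$. The only subtlety worth flagging explicitly is that the filter $\mathbf{w}$ is \emph{not} shifted, so the argument relies on $\kappa$ being evaluated pointwise patch-by-patch; if boundary handling is non-circular, one would need to note that equivariance holds only on the interior of the output, exactly as for ordinary convolution. I would add one short remark to that effect for honesty and otherwise leave the proof as the short chain displayed above.
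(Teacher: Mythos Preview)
Your proposal is correct and is essentially identical to the paper's own proof: both use the shift group property $(\mathbf{x}_{(j)})_{(i)}=\mathbf{x}_{(i+j)}$ to compute $\mathbf{g}_i(\mathbf{x}_{(j)})=\kappa(\mathbf{x}_{(i+j)},\mathbf{w})=\mathbf{g}_{i+j}(\mathbf{x})$ and conclude $\mathbf{g}_{(j)}(\mathbf{x})=\mathbf{x}_{(j)}\otimes\mathbf{w}$. Your added remarks about the filter not being shifted and about non-circular boundary handling are nice clarifications but not part of the paper's argument.
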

\begin{proof}
Assume $\mathbf{g}'(\mathbf{x})=\mathbf{x}_{(j)}\otimes\mathbf{w}$, according to \eqref{eq:kernel-function}, we have
\begin{equation}
\mathbf{g}'_{i}(\mathbf{x})= \kappa(\mathbf{x}_{(i+j)}, \mathbf{w})=\mathbf{g}_{i+j}(\mathbf{x}).
\end{equation}
Therefore, the $i_{\text{th}}$ element of $\mathbf{x}_{(j)}\otimes\mathbf{w}$ is the $(i+j)_{\text{th}}$ element of $ \mathbf{g}(\mathbf{x})$, hence we have
\begin{equation}
 \mathbf{g}_{(j)}(\mathbf{x})=\mathbf{x}_{(j)}\otimes\mathbf{w},
\end{equation}
which completes the proof.
\end{proof}
Note that the translational invariance of CNN is achieved by concatenating pooling layers to convolutional layers \cite{Goodfellow2016deep}, and the translational invariance of KNN can be achieved similarly. This property is crucial, since when invariances are present in the data, encoding them explicitly in an architecture provides an important source of regularization, which reduces the amount of training data required \cite{Henriques:2017te}.
As mentioned in \sref{sec:related-work}, the same property is also presented in \cite{Henriques:2015jy}, which is achieved by assuming that all the training samples are circular shifts of each other \cite{wang2019kernel}, while ours is inherited from convolution.
Interestingly, the kernel cross-correlator (KCC) defined in \cite{Wang:2018vt} is equivariant to any affine transforms (\eg, translation, rotation, and scale), which may be useful for further development of this work.

\begin{table}[!t]
	\begin{center}
		\begin{tabular}{cccc}\toprule
			Method & Convolution  & $L^1$-norm & $L^2$-norm \\\midrule
			None & 99.17 & 99.12    & 99.11   \\
			FGSM & 71.92 & 74.08  & 76.36  \\
			\bottomrule
		\end{tabular}
	\end{center}
	\caption{Test Accuracy (\%) of the white-box FGSM attack with $L^p$-norm kervolution on MNIST. 10K images are randomly selected.}
	\label{tab:attact-distance}
\end{table}

%-------------------------------------------------------------------------
\subsection{Model Capacity and Features}\label{sec:model-capacity}

It is straightforward that the kernel function \eqref{eq:kernel-function} takes \kerv to non-linear space, thus the model capacity is increased without introducing extra parameters.
Recall that CNN is a powerful approach to extract discriminative local descriptors.
In particular, the linear kernel \eqref{eq:linear-kernel} of convolution measures the similarity of the input $\mathbf{x}$ and filter $\mathbf{w}$, \ie the cosine of the angle $\theta$ between the two patches, since
$\left<\mathbf{x},\mathbf{w}\right> = \cos(\theta)\cdot\|\mathbf{x}\|\|\mathbf{w}\|$.
From this point of view, kervolution measures the similarity by match kernels, which are equivalent to extracting specific features \cite{Bo:2009tp}.
We next discuss how to interpret the kernel functions and present a few instances $\kappa(\cdot, \cdot)$ of the kervolutional operator.
One of the advantages of kervolution is that the non-linear properties can be customized without explicit calculation.

\paragraph{$L^p$-Norm Kervolution} The $L^1$-norm in \eqref{eq:1norm-kernel} and $L^2$-norm in \eqref{eq:2norm-kernel} simply measures the  Manhattan and Euclidean distances between input $\mathbf{x}$ and filter $\mathbf{w}$, respectively.
\begin{subequations}\label{eq:distance-kernel}
	\begin{align}
	\kappa_{\text{m}}(\mathbf{x},\mathbf{w}) &= \|\mathbf{x} - \mathbf{w}\|_1,\label{eq:1norm-kernel}\\
	\kappa_{\text{e}}(\mathbf{x},\mathbf{w}) &= \|\mathbf{x} - \mathbf{w}\|_2. \label{eq:2norm-kernel}
	\end{align}
\end{subequations}
Both "distances" of two points involves aggregating the distances between each element.
If vectors are close on most elements, but more discrepant on one of them,
Euclidean distance will reduce that discrepancy (elements are mostly smaller than 1 because of the normalization layers), being more influenced by the closeness of the other elements.
Therefore, the Euclidean kervolution may be more robust to slight pixel perturbation.
This hypothesis is verified by a simple simulation of adversary attack using the fast gradient sign method (FGSM) \cite{Goodfellow:2014tl}, shown in \tref{tab:attact-distance}, where `None' means the test accuracy on clean data.

\paragraph{Polynomial Kervolution}
Although the existing literatures have shown that the polynomial kernel \eqref{eq:polynomial-kernel} works well for the problem of natural language processing (NLP) when $d_p=2$ using SVM \cite{goldberg2008splitsvm},  we find its performance is better when $d_p=3$ in KNN for the problem of image recognition.
\begin{equation}\label{eq:polynomial-kernel}
	\kappa_{\text{p}}(\mathbf{x},\mathbf{w}) = (\mathbf{x}^T\mathbf{w}+c_p)^{d_p} = \sum _{j=0}^{d_p} c_p^{d_p-j}(\mathbf{x}^T\mathbf{w})^j,
\end{equation}
where $d_p~(d_p\in\mathbb{Z}^+)$ extends the feature space to $d_p$ dimensions; $c_p~(c_p\in\mathbb{R}^+)$ is able to balance the non-linear orders (Intuitively, higher order terms play more important roles when $c_p<1$).
As a comparison, the kernel pooling strategy \cite{Cui:2017em} concatenates the non-linear terms $c_j(\mathbf{x}^T\mathbf{w})^{j}$ directly, while they are finally linearly combined by subsequent fully connected layer, which dramaticaly increases the number of learnable  parameters in the linear layer.

To show the behavior of polynomial kervolution, the learned filters of LeNet-5 trained for MNIST are visualized in Figure \ref{fig:weight}, which contains all six channels of the first kervolutional layer using polynomial kernel ($d_p=2, c_p=0.5$).
The optimization process is described in Section \ref{sec:ablation}.
For a comparison, the learned filters from CNN are also presented.
It is interesting that some of the learned filters of KNN and CNN are quite similar, \eg channel 4, which means that part of the capacity of KNN learns linear behavior as CNN.
This verifies our understanding of polynomial kernel, which is a combination of linear and higher order terms. This phenomenon also indicates that polynomial kervolution introduces higher order feature interaction in a more flexible and direct way than the existing methods.

\paragraph{Gaussian Kervolution}
The Gaussian RBF kernel \eqref{eq:gaussian-kernel} extends kervolution to infinite dimensions.
\begin{equation}\label{eq:gaussian-kernel}
\kappa_{\text{g}}(\mathbf{x},\mathbf{w}) =\exp(-\gamma_g\|\mathbf{x}-\mathbf{w}\|^2),
\end{equation}
where  $\gamma_g~(\gamma_g\in\mathbb{R}^+)$ is a hyperparameter to control the smoothness of decision boundary. It extends kervolutoin to infinite dimensions because of  the $i$-degree terms in \eqref{eq:gaussian-series}.
\begin{equation}\label{eq:gaussian-series}
\kappa_{\text{g}}(\mathbf{x},\mathbf{w})=C\sum_{i=0}^{\infty}\frac{(\mathbf{x}^T\mathbf{w})^i}{i!},
\end{equation}
where $C=\exp\left(-\frac{1}{2}\left(\|\mathbf{x}\|^2+\|\mathbf{w}\|^2\right)\right)$ if $\gamma_g=\frac{1}{2}$.

The expression \eqref{eq:gaussian-series} is helpful for our intuitive understanding, while the recent discovery reveals more information.
It is shown in \cite{Bo:2010vi} that the Gaussian kernel and its variants are able to measure the similarity of gradient based patch-wise features, \eg SIFT \cite{Anonymous:2004uq} and HOG \cite{Dalal:2005gq}.
This provides a unified way to generate a rich, diverse visual feature set \cite{Gehler:2009hj}. However, instead of using the hand-crafted features as kernel SVM, with KNN, we are able to inherit the substantial achievements based on kernel trick while still taking advantage of the great generalization ability of neural networks.

\subsection{Kervolutional Layers and Learnable Kernel}
Similar to a convolutional layer, the operation of a kervolutional layer is slightly different from the standard definition \eqref{eq:kervolution} in which $\mathbf{x}_{(i)}$ becomes a 3-D patch in a sliding window on the input.
To be compatible with existing works, we also implement all popular available structures of convolution in CNN library \cite{paszke2017automatic} for kervolution, including the input and output channels, input padding, bias, groups (to control connections between input and output), size, stride, and dilation of the sliding window.
Therefore, the convolutional layers of all existing networks can be directly or partially replaced by kervolutional layers, which makes KNN inherit all the the existing achievements of CNN, \eg network architectures \cite{Krizhevsky:2012wl,He:2016ib} and their numerous applications \cite{Ren:2017kt}.

With kervolution, we are able to extract specific type of features without paying attention to the weight parameters.
However, as aforementioned, we still need to tune the hyperparameters for some specific kernels, \eg the balancer $c_p$ in polynomial kernel, the smoother $\gamma_g$ in Gaussian RBF kernel.
Although we noticed that the model performance is mostly insensitive to the kernel hyperparameters, which is presented in \sref{sec:hyperparameters}, it is sometimes troublesome when we have no knowledge about the kernel.
Therefore, we also implement training the network with learnable kernel hyperparameters based on the back-propagation \cite{rumelhart1988learning}.
This slightly increases the training complexity theoretically, but in experiments we found that this brings more flexibility and the additional cost for training several kernel parameters is negligible, compared to learning millions of parameters in the network.
Taking the Gaussian kervolution as an example, the gradients are computed as:
\begin{subequations}
	\begin{align}
	\frac{\partial }{\partial \mathbf{w}}
	\kappa_{\text{g}}(\mathbf{x},\mathbf{w}) &=2\gamma_g(\mathbf{x}-\mathbf{w})\kappa_{\text{g}}(\mathbf{x},\mathbf{w}),\\
	\frac{\partial }{\partial \gamma_g}
	\kappa_{\text{g}}(\mathbf{x},\mathbf{w}) &=-\|\mathbf{x}-\mathbf{w}\|^2\kappa_{\text{g}}(\mathbf{x},\mathbf{w}).
	\end{align}
\end{subequations}
Note that the polynomial order $d_p$ is not trainable because of the integer limitation, since the real exponent may produce complex numbers, which makes the network complicated.

%-------------------------------------------------------------------------
\begin{figure*}
	\centering
	\includegraphics[width=1.0\linewidth]{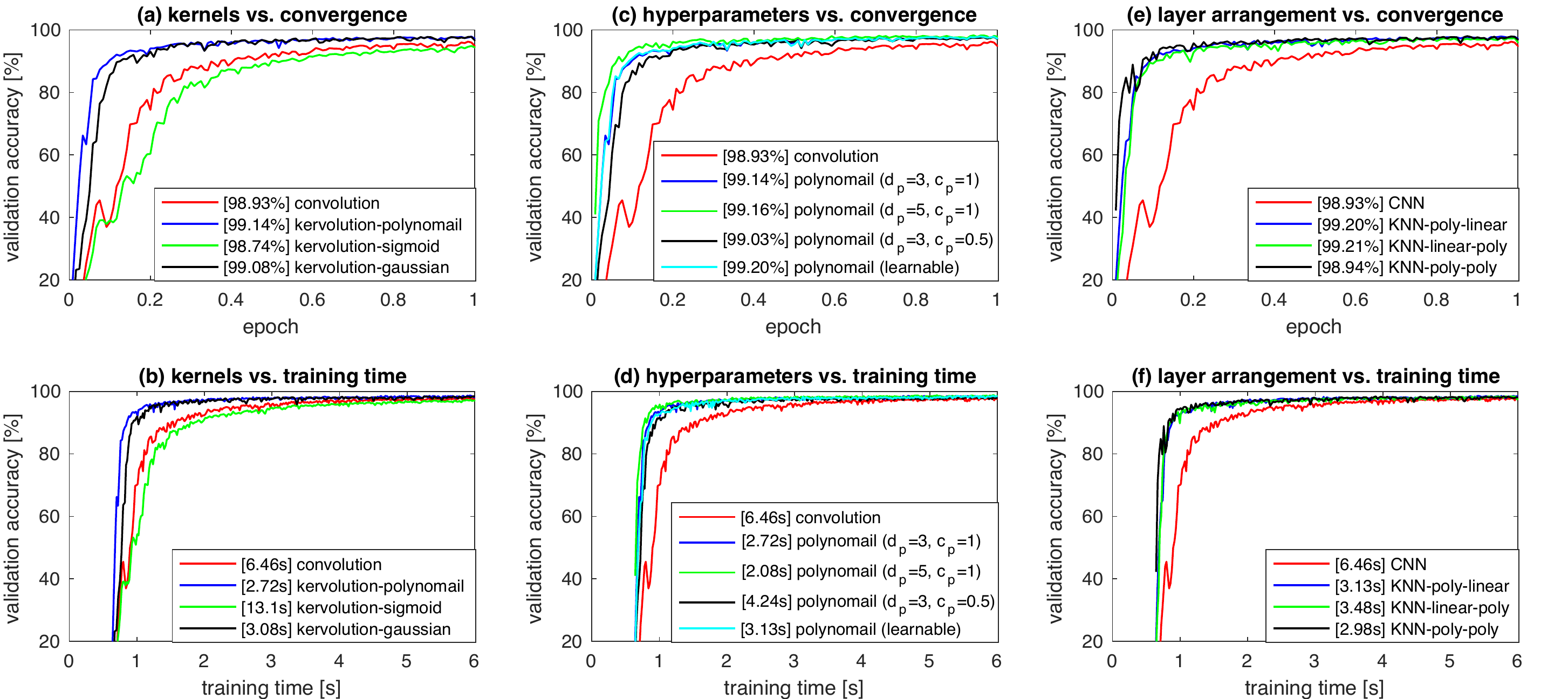}
	\caption{The influences of kervolution on the convergence rate. The best validation accuracy (20 epochs) and training time to target accuracy ($98\%$) are displayed in brackets within the legend of each figure, respectively. (a) and (b) demonstrate that the kernel functions have a significant impact on the convergence speed. (c) and (d) demonstrate that the hyperparameters of kervolutional layer play less important roles than kernels. (e) and (f) shows the effects of arrangement of kervolutional layers.}
	\label{fig:hyperparameter}
\end{figure*}

\section{Ablation Study}\label{sec:ablation}

This section explores the influences of the kernels, the hyperparameters, and combination of kervolutional layers using LeNet-5 and MNIST \cite{Lecun:1998hy}.
To eliminate the influence of other factors, all configurations are kept as the same.
The accuracy of modern networks on MNIST has been saturated, thus we adopt the evaluation criteria proposed in DAWNBench \cite{Coleman:ue} that jointly considers the computational efforts and precision.
It measures the total training time to a target validation accuracy (98\%), which is a trade-off between efficiency and accuracy.
In all the experiments of this section, we apply the stochastic gradient descent (SGD) method for training, where a mini-batch size of 50, a momentum of 0.9, an initial learning rate of 0.003, a multiplicative factor of 0.1, and a maximum epoch of 20 with milestones of $[10, 15]$ are adopted.
Our algorithm is implemented based on the PyTorch library \cite{paszke2017automatic}.
All tests are conducted on a single Nvidia GPU of GeForce GTX 1080Ti.
The reported training time does not include the testing and checkpoint saving time.

\subsection{Kernels}
Following the ablation principle, we only replace the convolutional layers of LeNet-5 by kernvolutional layer using three kernel functions, \ie polynomial kernel of $d_p=3, c_p=1$ in \eqref{eq:polynomial-kernel},
Gaussian kernel of $\gamma_g=1$ in \eqref{eq:gaussian-kernel}, and also sigmoid kernel $\kappa_{\text{s}}(\mathbf{x},\mathbf{w}) = \tanh(\mathbf{x}^T\mathbf{w})$.
As shown in \fref{fig:hyperparameter} (a) and (b), although the computational complexity of non-linear kernels is slightly higher than that of linear kernel (convolution), the polynomial and Gaussian KNN are still able to converge to a validation accuracy of $98\%$ more than $2\times$ faster than the original CNN.
However, the convergence speed of sigmoid KNN is $2\times$ slower than that of CNN, which indicates that the kernel functions are crucial and have a significant impact on performance.
Thanks to the wealth of traditional methods, we have many other useful kernels \cite{smola1998learning}, although we cannot test all of them in this paper. The $L^1$ and  $L^2$-norm KNN achieve an accuracy of 99.05\% and 99.19\%, respectively, but we omit them in \fref{fig:hyperparameter} (a) and (b)  because they nearly coincide with the polynomail curve.

\subsection{Hyperparameters}\label{sec:hyperparameters}
From the above analysis, we credit the significant improvements of convergence speed to the usage of different kernels.
This part verifies this assumption and further explores the influences of kernel hyperparameters.
The polynomial kervolution \eqref{eq:polynomial-kernel} with two hyperparameters, non-linear order $d_p$ and balancer $c_p$, is selected.
As shown in Figure \ref{fig:hyperparameter} (c) and (d), the convergence speed and validation accuracy of polynomial KNN using different hyperparameters are pretty similar to that of Figure \ref{fig:hyperparameter} (a) and (b), which indicates that KCC is less sensitive to the kernel hyperparameters.

It is also noticed that the KNN with learnable kernel parameters achieves the best precision (99.20\%) in this group, although it slightly increases the training time compared to KNN ($d_p=3,c_p=1$). However, the cost is justifiable since it saves the hyperparameter tuning process and the convergence is still much faster than the baseline CNN.

\subsection{Layer Arrangement}\label{sec:layer-arrangement}
This part explores the influences of the placement of kervolutional layers.
Thanks to the simplicity of LeNet-5 (two convolutional layers), we can test all possible configurations of layer arrangement, \ie "conv-conv", "kerv-conv", "conv-kerv", and "kerv-kerv".
As shown in Figure \ref{fig:hyperparameter} (e) and (f), where the polynomial kernel ($d_p=3,c_p=1$) is adopted, KNN still brings faster convergence.
One interesting phenomenon is that the architecture of "kerv-conv" achieves better precision but slower convergence than "conv-kerv" (we run multiple times and the results are similar).
This indicates that the sequence of kervolutional layers has an impact on performance, although the model complexity is the same.
One side effect is that we may need to make some efforts to adjust the layer sequence for deeper KNNs.
It is also noticed that the architecture of `kerv-kerv' achieves the fastest convergence but only with a comparable validation accuracy to CNN.
We argue that this is caused by the over-fitting problem, since its final training loss is very close to others ($\approx0.01$), which means that the model capacity of double kervolutional layers together with the activation and max pooling layers is too large for the MNIST dataset.

\subsection{Removing ReLU}
As mentioned in \sref{sec:introduction}, the non-linearity of CNN mainly comes from the activation (ReLU) and max pooling layers.
Intuitively, KNN may be able to achieve same model performance without activation or max pooling layers.
To this end, we simply remove all the activation layers of LeNet-5 and replace the max pooling by average pooling layers, which means that all the non-linearity comes from the kervolutional layers.
Without surprise, the CNN only achieves an accuracy of 92.22\%, which is far from the target accuracy of 98\%, hence the training time comparison figure is omitted.
However, the KNN of "gaussian-polynomial" and "polynomial-polynomial" both achieve an accuracy of 99.11\%, which further verifies the effectiveness of kervolution.
In another sense, the strategy of removing the activation layers is one of the solutions to the aforementioned over-fitting problem in \sref{sec:layer-arrangement}, although we need more investigations to find the best architectures for KNN.

\begin{table}%{}{0.4\textwidth}
	\begin{center}
		\begin{tabular}{ccc}\toprule
			Network & CIFAR-10 & CIFAR-100\\\midrule
			CNN  \cite{Huang:2016vd} & 13.63    & 44.74 \\
			KNN & {10.85}  & {37.12}  \\
			\bottomrule
		\end{tabular}
	\end{center}
	\caption{Validation error (\%) of ResNets on CIFAR-10 and CIFAR-100 without data augmentation.}
	\label{tab:cifar-}
\end{table}

\begin{table}%{l}{0.5\textwidth}
	\begin{center}
		\begin{tabular}{ccccc}\toprule
			\multirow{2}{*}{Architecture}  & \multicolumn{2}{c}{CIFAR-10+}   & \multicolumn{2}{c}{CIFAR-100+}\\
			& CNN  &  KNN & CNN   &  KNN \\\midrule
			GoogLeNet \cite{dubey2018pairwise} & 13.37 & {5.16} & 26.65 & {20.84}\\
%			 GoogLeNet \cite{Szegedy:2015ja} & 13.37 & {5.16} & 21.27 & {21.04}\\
			ResNet \cite{He:2016ib}  & 6.43 & {4.69}  & 27.22 & {22.49}\\
			DenseNet \cite{Huang:2017kg} & 5.24 & {5.08}   & {24.42} &  24.92\\
			\bottomrule
		\end{tabular}
	\end{center}
	\caption{Validation error (\%) on CIFAR-10 and CIFAR-100 on different architectures with data augmentation.}
	\label{tab:cifar+}
\end{table}

%------------------------------------------------------------------------
\section{Performance}
This section aims to demonstrate the effectiveness of deep KNN on larger datasets.
In practice, the network architecture has a significant impact on performance.
Since the modern networks are so deep and kervolution provides many possibilities via different kernels, we cannot perform exhaustive tests to find the best sequence of kervolutional layers.
Hence, we construct KNN based on several existing architectures by mainly changing the first convolutional layers to kervolutional layers.
Other factors, such as data augmentation and optimizers, are kept as their original configurations.
As discovered in Section \ref{sec:ablation}, this may not be the best configuration, while it can demonstrate the effectiveness of kervolution.

The CIFAR experiments in this section are conducted in a single GPU of Nvidia GeForce GTX 1080Ti while we employ four Nvidia Tesla M40 in the ImageNet experiments.
The polynomial kervolutional layer in this section adopts the learnable balancer $c_p$ with power $d_p=3$.

\subsection{CIFAR}\label{sec:cifar}

The CIFAR-10 and CIFAR-100 \cite{krizhevsky2009learning} datasets consist of colored natural images with $32\times32$ pixels in 10 and 100 classes, respectively.
Each dataset contains $50\kilo$ images for training and $10\kilo$ for testing. In the testing procedure, only the single view of the original image is evaluated.

\begin{table}[!t]
	\begin{center}
		\begin{tabular}{ccc}  \toprule
			Hyperparm & $d_p=3$  & $d_p=5$\\  \midrule
			$c_p=0$   & 4.78 & 5.42  \\
			$c_p=1$   & 4.60 & 5.36  \\
			learnable $c_p$  & 4.76 & 4.73 \\
			\bottomrule
		\end{tabular}
	\end{center}
	\caption{Validation error (\%) of polynomial KNN using ResNet-32 on CIFAR-10+ using different hyper parameters.}
	\label{tab:cifar-poly-acc}
\end{table}

\begin{table}[!t]
	\begin{center}
		\begin{tabular}{ccc}\toprule
			Hyperparm & $d_p=3$  & $d_p=5$\\\midrule
			$c_p=0$    & 0.83   & 1.41  \\
			$c_p=1 $    & 1.41   & 1.46  \\
			learnable $c_p$  & 0.86 & 0.79 \\
			\bottomrule
		\end{tabular}
	\end{center}
	\caption{Training time (\hour)  of polynomial KNN using ResNet-32 on CIFAR-10+ using different hyper parameters.}
	\label{tab:cifar-poly-time}
\end{table}

The proposed KNNs are first evaluated without data augmentaion using the architecture of ResNet. We construct and train ResNet-110 following the architecture of \cite{He:2016ib} with cross-entropy loss. The stochastic gradient descent (SGD) is adopted with momentum of 0.9.
We train the networks for 200 epochs with a mini-batch size of 128.
The learning rate decays by 0.1 at the 75, 125, and 150 epochs; the weight decay stays at $5\times10^{-4}$.
The validation error of KNN as well as the best performance of baseline CNN from \cite{Huang:2016vd} are presented in Table \ref{tab:cifar-}.
It is interesting that KNN outperforms CNN with a significant improvements on the CIFAR dataset.

We perform more experiments using the data augmentation techniques, and the datasets are denoted as `CIFAR-10+' and 'CIFAR-100+', respectively. The KNNs are constructed following the architecture of GoogLeNet \cite{Szegedy:2015ja} and DenseNet-40-12 \cite{Huang:2017kg}.
Data augmentation is applied following the configuration in ResNet \cite{He:2016ib}, including horizontal flipping with a probability of 50\%, reflection-padding by 4 pixels, and random crop with size $32\times32$.

Different from ResNet, we train DenseNet-40-12 following its original configuration \cite{Huang:2017kg}, \ie SGD with batch size 64 for 300 epochs. The initial learning rate is set to 0.1, and is divided by 10 at 50\% and 75\% of the total number of iterations.
Table \ref{tab:cifar+} lists the performance of KNN and baselines from \cite{He:2016ib,Huang:2017kg,dubey2018pairwise}.
We cannot see a significant improvement on DenseNet, which indicates that polynomial kervolution may not suit for fully connected architecture.

We further demonstrate the sensitivity to the kernel hyperparameters.
Table \ref{tab:cifar-poly-acc} lists the validation errors of KNN on CIFAR-10+ with polynomial kernel of $d_p=3,5$ and $c_p=0,1$ using the architecture of ResNet-32. The performance with learnable balancer of $c_p$ is also given for comparison.
As suggested by \cite{Coleman:ue}, their training time to an accuracy of 94\%  is measured and presented in Table \ref{tab:cifar-poly-time}.
It is interesting that, the configuration of $d_p=3, c_p=1$ achieves the best accuracy, while $d_p=5$ with learnable $c_p$ requires the least training time. The networks with learnable kernel achieve the best overall performance by jointly considering the training time and accuracy. This indicates that the learnable kernel technique can produce compromised performance without tuning parameters.

\subsection{ImageNet}
The ILSVRC 2012 classification dataset \cite{deng2009imagenet} is composed of $1.2 \mega$ images for training and $50 \kilo$ images for validation in $1000$ classes. For fair comparison, we apply the same data augmentation as described in \cite{He:2016ib,he2016identity}, where the single-crop and 10-crop at a size of $224\times 224$ are applied for testing.

We select four versions of ResNet \cite{He:2016ib}, including ResNet-18, ResNet-34, ResNet-50 and ResNet-101, as the baselines.
The kervolutional layer is applied with a polynomial kernel ($d=3,c_p=2$). All the networks are trained using the stochastic gradient descent (SGD) method for 100 epochs with a batch size of 256. The learning rate is set to 0.1, and is reduced every 30 epochs.
Also, a weight decay of $10^{-4}$ and a momentum of $0.9$ without dampening are employed.
In our experiments, the best performance of ResNet cannot be achieved in limited training time.
To guarantee a fair comparison, the results of ResNet which have the best accuracy in \cite{fb2016, He:2016ib, he2016identity, Huang:2017kg} are chosen.
We report the single-crop and 10-crop validation error on ImageNet in Table \ref{tab:imagenet}, where the performance of KNN is the average of five running.

In Table \ref{tab:imagenet}, top-1 errors using ResNet-18/34/50/101 are reduced by 0.5\%, 0.41\%, 0.29\%, and 0.7\% in single-crop testing and 0.45\%, 0.75\%, 0.80\%, and 0.83\% in 10-crop testing, respectively.
For top-5 errors, KNN outperform corresponding ResNets by 0.43\%, 0.24\%, 0.23\%, and 0.28\% in single-crop testing, and 0.39\%, 0.68\%, 0.74\%, and 0.87\% in 10-crop testing, respectively. It is noticed that simple replacement of the convolutional layer in ResNet leads to obvious improvements. We believe that more customized network architecture as well as extensive hyperparameter searches can further improve the performance on ImageNet.

\begin{table}%{l}{0.5\textwidth}
	\centering
	\begin{center}
		\begin{tabular}{ccc}\toprule
			Network& Top-1  & Top-5\\\midrule
			ResNet-18   & 30.24 / 27.88 & 10.92 / 9.42 \\
			KNN-18 & \textbf{29.74 / 27.43}  & \textbf{10.49 / 9.03}  \\
			\midrule
			ResNet-34  & 26.70 / 25.03 & 8.58 / 7.76 \\
			KNN-34 & \textbf{26.29 / 24.28}  & \textbf{8.34 / 7.08}  \\
			\midrule
			ResNet-50   & 23.85 / 22.85 & 7.13 / 6.71 \\
			KNN-50 & \textbf{23.56 / 22.05}  & \textbf{6.90 / 5.97}  \\
			\midrule
			ResNet-101   & 22.63 / 21.75 & 6.44 / 6.05 \\
			KNN-101 & \textbf{21.93 / 20.92}  & \textbf{6.16 / 5.18}  \\
			\bottomrule
		\end{tabular}
	\end{center}
	\caption{The top-1 and top-5 validation errors on ImageNet, with single-crop / 10-crop testing, respectively.}
	\label{tab:imagenet}
\end{table}

\section{Discussion}

\paragraph{Kernel}
Different from convolution, which can only extract linear features, kervolution is able to extract customized patch-wise non-linear features, which makes KNN much more flexible.
It is demonstrated that the higher order terms make the subsequent linear classifier more discriminant, while this does not increase the computational complexity.
However, we have only tested several kernels, \eg polynomial and Gaussian, which may not be optimal.
It is obvious that the kernel functions and their hyperparameters can be task-driven and more investigations are necessary.

\paragraph{Training}
It is also noticed that the training can be unstable when a network contains too much non-linearity, this is because the model complexity is too high for a specific task,
which can be simply solved by reducing the number of kervolutional layers.
More investigations on searching appropriate non-linearity for a specific task is challenging.

\paragraph{Architecture}
We have only applied kervolution to the existing architectures, \eg ResNet. While this is not optimal, especially when considering that the mechanism of deep architectures is still unclear \cite{kuo2016understanding}. It is obvious that the performance of kernvolution is dependent on the architecture. One of the interesting challenges for future work is to investigate the relationship between the architecture and kervolution.

%------------------------------------------------------------------------
\section{Conclusion}
This paper introduces the kervolution to generalize convolution to non-linear space, and extends convolutional neural networks to kervolutional neural networks.
It is shown that kervolution not only retains the advantages of convolution, \ie sharing weights and equivalence to translation, but also enhances the model capacity and captures higher order interactions of features, via patch-wise kernel functions without introducing additional parameters. It has been demonstrated that, with careful kernel chosen, the performance of CNN can be significantly improved on MNIST, CIFAR, and ImageNet dataset via replacing convolutional layers by kervolutional layers. Due to the large number of choices of kervolution, we cannot perform a brute force search for all the possibilities, while this opens a new space for the construction of deep networks. We expect the introduction of kervolutional layers in more architectures and extensive hyperparameter searches can further improve the performance.

{\small
\bibliographystyle{ieee_fullname}
\bibliography{egbib,papers}
}

\includepdf[pages=-]{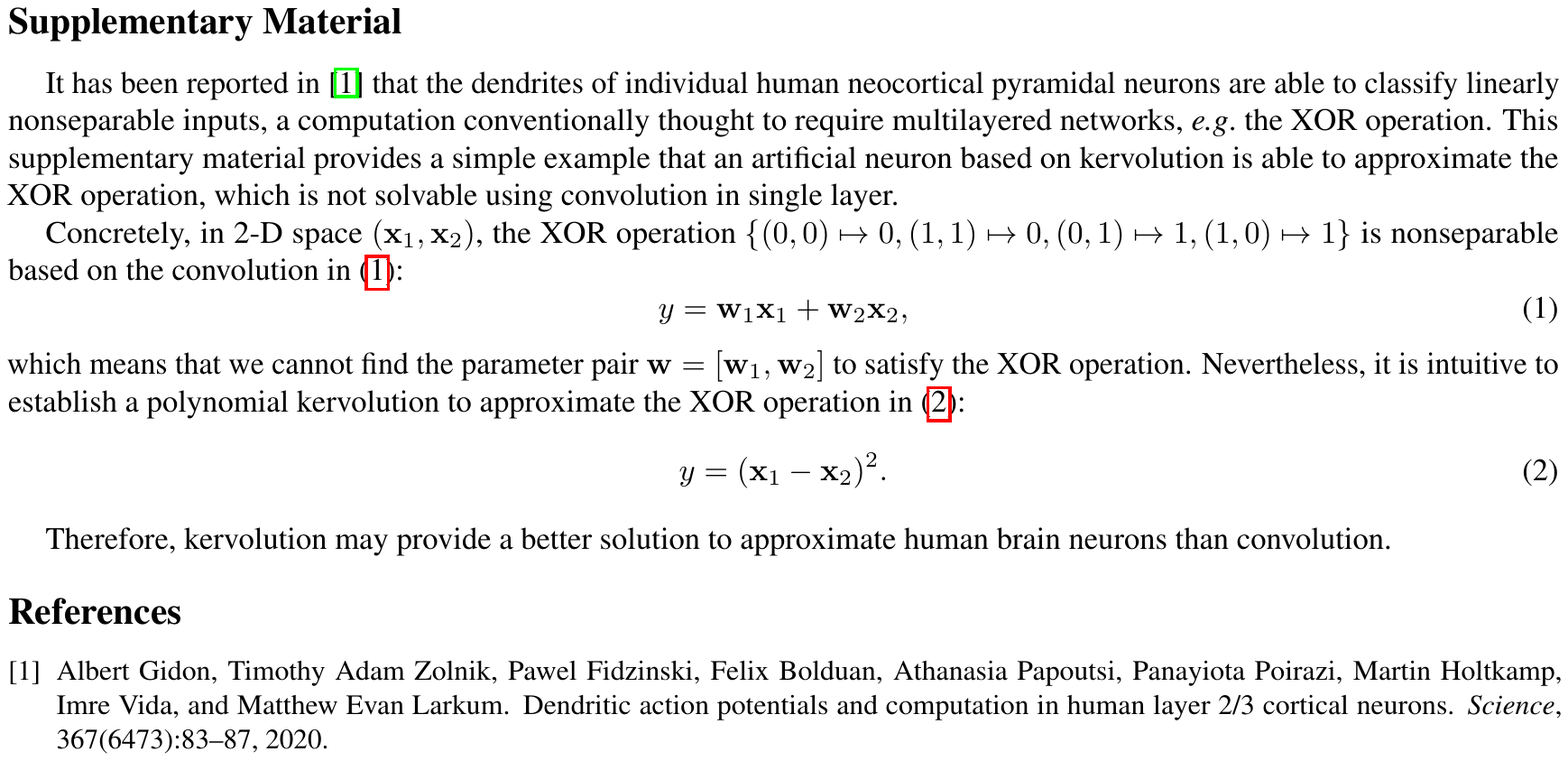}

\end{document}